\theoremstyle{plain}
\newtheorem{theorem}{Theorem}[section]
\newtheorem{proposition}[theorem]{Proposition}
\theoremstyle{definition}
\theoremstyle{remark}
\DeclareMathOperator{\SE}{\textup{SE}}
\DeclareMathOperator{\SO}{\textup{SO}}
\DeclareMathOperator{\xm}{\textup{\textbf{x}}}
\DeclareMathOperator{\cm}{\textup{\textbf{c}}}
\DeclareMathOperator{\ym}{\textup{\textbf{y}}}
\DeclareMathOperator{\Rb}{\textbf{\textit{R}}}
\begin{document}

\title{\textbf{Steerable 3D Spherical Neurons}}

\author{%
	Pavlo Melnyk,\quad Michael Felsberg,\quad Mårten Wadenbäck\\
	{\small Computer Vision Laboratory, Department of Electrical Engineering, Linköping University}\\
	{\small \texttt{\{pavlo.melnyk, michael.felsberg, marten.wadenback\}@liu.se}}
}

\date{}

\maketitle










\begin{abstract}
Emerging from low-level vision theory, steerable filters found their counterpart in prior work on steerable convolutional neural networks equivariant to rigid transformations. In our work, we propose a steerable feed-forward learning-based approach that consists of neurons with spherical decision surfaces and operates on point clouds.
Such spherical neurons are obtained by conformal embedding of Euclidean space and have recently been revisited in the context of learning representations of point sets. 
Focusing on 3D geometry, we exploit the isometry property of spherical neurons and derive a 3D steerability constraint. 
After training spherical neurons to classify point clouds in a canonical orientation, we use a tetrahedron basis to quadruplicate the neurons and construct rotation-equivariant spherical filter banks.
We then apply the derived constraint to interpolate the filter bank outputs and, thus, obtain a rotation-invariant network. 
Finally, we use a synthetic point set and real-world 3D skeleton data to verify our theoretical findings.
The code is available at \url{https://github.com/pavlo-melnyk/steerable-3d-neurons}.
\end{abstract}

\section{Introduction}
\label{introduction}
We present a feed-forward model consisting of steerable 3D neurons for point cloud classification, an important and challenging problem with many applications such as autonomous driving, human-robot interaction, and mixed-reality installations. 
Constructing rotation-equivariant (steerable) models allows us to use the feature of a given point cloud to synthesize features of the same point cloud with different orientations.
Further, rotation-equivariant networks enable producing rotation-invariant predictions and, therefore, lowered data augmentation requirements for learning. 

In this paper, we achieve the steerability using a conformal embedding to obtain higher-order decision surfaces. Following the motivation in the recent work of \citet{melnyk2020embed}, we focus on 3D geometry and spherical decision surfaces, arguing for their natural suitability for problems in Euclidean space. We show how a \textit{spherical neuron}, \ie, the hypersphere neuron \citep{banarer2003hypersphere} or its generalization for 3D input point sets --- the geometric neuron \citep{melnyk2020embed} --- can be turned into a steerable neuron. 
Making spherical neurons \textit{steerable} adds to the practical value of the prior work by \citet{melnyk2020embed}.
\begin{figure}
	\centering
	\includegraphics[width=\linewidth]{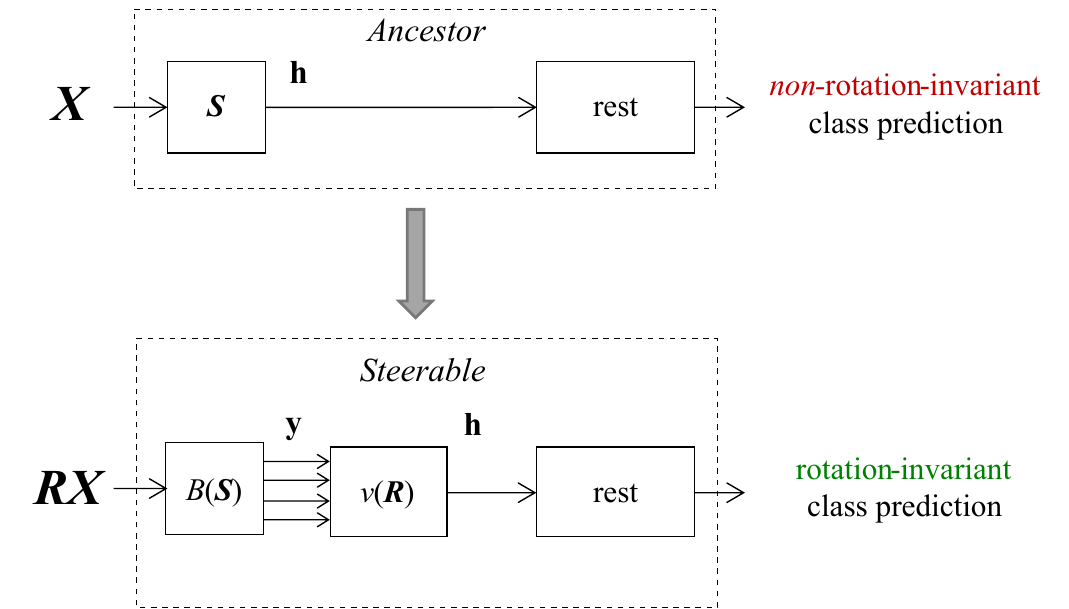}
	\caption{Our approach overview:  First, we train a classifier (\textit{Ancestor}) consisting of the spherical neurons (necessarily in the first layer) to classify 3D point clouds in a canonical orientation. We then fix all the learned parameters and construct rotation-equivariant spherical filter banks $B(\textbf{\textit{S}})$ \eqref{eq:sphere_filter_bank} from the first layer parameters. Next, we compute the interpolation coefficients $v(\textbf{\textit{R}})$ \eqref{eq:vs_for_one_point} to fulfill the steerability constraint \eqref{eq:our_steerability_constraint}. The result is a \textit{Steerable} classifier, where \textbf{y} is equivariant and \textbf{h} is invariant to rotations of the model input.}
	\label{fig:model_overview}
	\vspace{-9pt}
\end{figure}

We prove that the aforementioned spherical neurons in any dimension require only up to first-degree spherical harmonics to accommodate the effect of rotation.
This allows us to derive a 3D steerability constraint for such neurons and to describe a recipe for creating a steerable model from a pretrained classifier (see Figure~\ref{fig:model_overview}). 
Using the synthetic Tetris dataset \citep{thomas2018tensor} and the skeleton data from the UTKinect-Action3D dataset \citep{xia2012view}, we verify the derived constraint and check its stability with respect to perturbations in the input. 

Importantly, we focus on theoretical aspects of our steerable method to produce rotation-invariant predictions, and show that it is attainable by using both synthetic and real 3D data (Section~\ref{known_r_experiment}). What remains to do is to devise a practical way to achieve it. 

The core of our contributions are the novel theoretical results on steerability and equivariance and can be summarized as follows:
 
(\textbf{a}) We prove that the activation of spherical neurons \citep{banarer2003hypersphere, melnyk2020embed} on rotated input only varies by up to first-degree spherical harmonics in the rotation angle (Section~\ref{steerable_basis}).

(\textbf{b}) Based on a minimal set of four spherical neurons that are rotated to the corresponding vertices of a regular tetrahedron, we construct a rotation-equivariant spherical filter bank (Section~\ref{filter_bank}) and derive the main result of our paper (Section~\ref{our_steerability_constraint}) --- a 3D steerability constraint for spherical neurons \eqref{eq:our_steerability_constraint} and \eqref{eq:gn_steerability_constraint}.

\section{Related work}
\subsection{Steerability and equivariance}
\label{related_work_1}
Steerability is a powerful concept from early vision and image processing \citep{freeman1991design, knutsson1992aframework, simoncelli1992shiftable, perona1995deformable, simoncelli1995pyramid, teo1998lie} that resonates in the era of deep learning.
The utility of steerable filters and the main theorems for their construction are presented in the seminal work of \citet{freeman1991design}. 

Geometric equivariance in the context of computer vision has been an object of extensive research over the last decades \citep{VANGOOL1995259}. 
Equivariance is a necessary property for steerability since steerability requires changing the function output depending on the actions of a certain group.
Thus, the output needs to contain a representation of the group that acts on the input.
Therefore, the operator must commute with the group (note that the representation might change from input to output).
For this reason, Lie theory can be used to study steerable filters for equivariance and invariance \citep{reisert2008group}.

Nowadays, equivariance-related research extends into deep learning, \eg, the $\SE(3)$-equivariant models of \citet{fuchs2020se3}, \citet{thomas2018tensor}, and \citet{zhao2020quaternion}, and the $\SO(3)$-equivariant network of \citet{anderson2019cormorant}, as well as \citet{Marcos_2017_ICCV} and \citet{NEURIPS2018_a3fc981a}. Also steerable filter concepts are increasingly used in current works. For example, the work of 
\citet{cohen2016steerable} considered image data and proposed a CNN architecture to produce equivariant representations with steerable features, which involves fewer parameters than traditional CNNs. They considered the dihedral group (\textit{discrete} rotation, translation, and reflection), and the steerable representations in their work are proposed as formation of elementary feature types. One limitation of their approach is that rotations are restricted to four orientations, \ie, by multiples $\pi/2$.
More recently, \citet{weiler2018learning} utilized group convolutions and introduced steerable filter convolutional neural networks (SFCNNs) operating on images to jointly attain equivariance under translations and discrete rotations. In their work, the filter banks are learned rather than fixed. 
Further, the work of \citet{weiler2019general} proposed a unified framework for E(2)-equivariant steerable CNNs and presented their general theory.

The steerable CNNs for 3D data proposed by \citet{weiler20183d} are closely related to our work. The authors employed a combination of scalar, vector and tensor fields as features transformed by $\SO(3)$ representations and presented a model that is equivariant to $\SE(3)$ transformations. They also considered different types of nonlinearities suitable for nonscalar components of the feature space.
The novel SE(3)-equivariant approach by \citet{fuchs2020se3} introduced a self-attention mechanism that is invariant to global rotations and translations of its input and solves the limitation of angularly constrained filters in other equivariant methods, \eg \citet{thomas2018tensor}.
Noteworthy, the work of \citet{jing2020learning} proposed the geometric vector perceptron (GVP) consisting of two linear transformations for the input scalar and vector features, followed by nonlinearities. The GVP scalar and vector outputs are invariant and equivariant, respectively, with respect to an arbitrary composition of rotations and reflections in the 3D Euclidean space.
\vspace{-2pt}

\subsection{Conformal modeling and the hypersphere neuron}
The utility of conformal embedding for Euclidean geometry and the close connection to Minkowski spaces  are thoroughly discussed by \citet{li2001generalized}. An important result is that one can construct hyperspherical decision surfaces using representations in the conformal space \citep{li2001universal}, as done in the work of \citet{perwass2003spherical}. 
The hypersphere neuron proposed by \citet{banarer2003hypersphere} is such a spherical classifier.
Remarkably, since a hypersphere can be seen as a generalization of a hyperplane, the standard neuron can be considered as a special case of the hypersphere neuron.

Stacking multiple hypersphere neurons in a feed-forward network results in a multilayer hypersphere perceptron (MLHP), which was shown by \citet{banarer2003design} to outperform the standard MLP for some classification tasks. However, its application to point sets was not discussed.
This motivated the work of \citet{melnyk2020embed} on the geometric neuron, where the learned parameters were shown to represent a combination of spherical decision surfaces. Moreover, the geometric (and hypersphere) neuron activations were proved to be isometric in the 3D Euclidean space. In our work, we use the latter observation as the necessary condition for deriving the steerability constraint.
\vspace{-2pt}

\subsection{Comparison to other methods}
Steerability requires filters that either are (combinations of) spherical harmonics (see, \eg, \citet{fuchs2020se3}) or are constructed based on \textit{learned} neurons and that behave as such (our work).

The key point distinguishing our approach from other equivariant networks is that we do not constrain the space of learnable parameters (as opposed to, \eg, \citet{thomas2018tensor}, \citet{weiler20183d}, and \citet{fuchs2020se3}), but construct our steerable model from a freely trained base network, as we discuss in detail in Section~\ref{methodology}. That is, the related work uses spherical harmonics as atoms, \ie, a hand-designed basis, and learns \textit{only} the linear coefficients under constraints. In contrast, the only thing we inherit from the hand design is the constraint of first-degree harmonics (see Theorem~\ref{th:the_theorem}) -- all other degrees of freedom are learnable.

In addition, even though we only consider scalar fields in the present work, the theoretical results of Section~\ref{methodology} can be applied to broader classes of feature fields.

\section{Background}
\label{background}

\subsection{Steerability}
\label{steerability}
As per \citet{freeman1991design}, a 2D function $f(x, y)$ is said to steer if it can be written as a linear combination of rotated versions of itself, i.e., when it satisfies the constraint
\begin{equation}
	\label{eq:2d_steerability_constraint}
	f^\theta(x, y) = \sum_{j=1}^{M}v_j(\theta) f^{\theta_{j}}(x, y)~,
\end{equation}
where $v_j(\theta)$ are the interpolation functions,  $\theta_{j}$ are the basis function orientations, and $M$ is the number of basis function terms required to steer the function.
An alternative formulation can be found in the work of \citet{knutsson1992aframework}.
In 3D, the steering equation becomes
\begin{equation}
	\label{eq:3d_steerability_constraint}
	f^{\textbf{\textit{R}}}(x, y, z) = \sum_{j=1}^{M}v_j(\textbf{\textit{R}}) f^{\textbf{\textit{R}}_{j}}(x, y, z)~,
\end{equation}
where $f^{\textbf{\textit{R}}}(x, y, z)$ is $f(x, y, z)$ rotated by $\textbf{\textit{R}} \in \SO(3)$, and each $\textbf{\textit{R}}_j \in \SO(3)$ orients the corresponding $j$th basis function.

Theorems 1, 2 and 4 in \citet{freeman1991design} describe the conditions under which the steerability constraints ({\ref{eq:2d_steerability_constraint}}) and ({\ref{eq:3d_steerability_constraint}}) hold, and how to determine the minimum number of basis functions for the 2D and 3D case, respectively.

\subsection{Conformal embedding}
\label{conformal_embedding}
We refer the reader to Section 3 in the work of \citet{melnyk2020embed} for more details, and only briefly introduce important notation in this section.
 The conformal space for the Euclidean $\mathbb{R}^{n}$ counterpart can be formed as $\mathbb{ME}^{n} \equiv \mathbb{R}^{n+1,1} = \mathbb{R}^{n} \oplus \mathbb{R}^{1,1}$, where $\mathbb{R}^{1, 1}$ is the Minkowski plane \citep{li2001generalized} with orthonormal basis defined as $\{e_{+},~e_{-}\}$ and null basis   $\{e_{0},~e_{\infty}\}$ representing the origin $e_{0} = \frac{1}{2} (e_{-} - e_{+})$ and point at infinity $e_{\infty} = e_{-} + e_{+}$. Thus, a Euclidean vector $\textbf{x}\in\mathbb{R}^n$ can be embedded in the conformal space $\mathbb{ME}^n$  as
\begin{equation}
	\label{eq:conformal_embedding}
	X = \mathcal{C}(\textbf{x}) = \textbf{x} + \frac{1}{2}\lVert\textbf{x}\rVert^2~e_{\infty}+e_{0}~,
\end{equation}
where $X \in \mathbb{ME}^n$ is called \textit{normalized}. 
The conformal embedding \eqref{eq:conformal_embedding} represents the stereographic projection of $\textbf{x}$ onto a projection sphere in $\mathbb{ME}^n$ and is \textit{homogeneous}, \ie, all embedding vectors in the equivalence class $[X] = \big\{Z\in\mathbb{R}^{n+1,1} : Z=\gamma X,\;\gamma\in\mathbb{R} \setminus\{0\}\big\}$ are taken to represent the same vector $\textbf{x}$.
Importantly, given the conformal embedding $X$ and some $Y=\textbf{y} + \frac{1}{2}\lVert\textbf{y}\rVert^2~e_{\infty}+e_{0}$, their scalar product in the conformal space corresponds to their Euclidean distance, $X\cdot Y = -\frac{1}{2}\lVert\textbf{x}-\textbf{y}\rVert^2$. This interpretation of the scalar product in the conformal space is the main motivation in constructing spherical neurons.

\subsection{Spherical neurons}
\label{spherical_classifiers}
\begin{figure}
	\centering
	\includegraphics[width=0.9\linewidth]{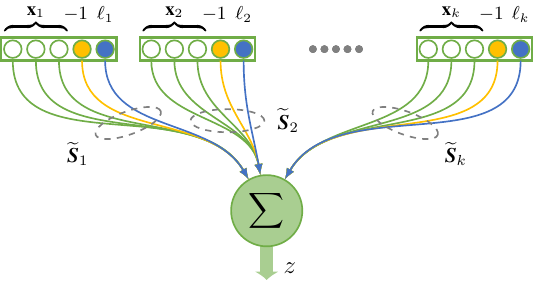}
	\caption{The geometric neuron ($\ell_k = -\frac{1}{2}\lVert\textbf{x}_k\rVert^2$).}
	\label{fig:geometric_neuron}
	\vspace{-9pt}
\end{figure}
By \textit{spherical neurons}, we collectively refer to hypersphere \citep{banarer2003design} and geometric \citep{melnyk2020embed} neurons, which have spherical decision surfaces.

As discussed by \citet{banarer2003design}, by embedding both a data vector $\textbf{x} \in\mathbb{R}^n$ and a hypersphere $S\in\mathbb{ME}^n$ in $\mathbb{R}^{n+2}$ as
\begin{equation}
	\label{hypersphere_in_r}
	\begin{aligned}
		\textbf{\textit{X}} &= \big(x_1, \dots, x_n, -1, -\frac{1}{2}\lVert\textbf{x}\rVert^2\big)\in\mathbb{R}^{n+2},\\
		\textbf{\textit{S}} &= \big(c_1, \dots, c_n, \frac{1}{2}(\lVert\textbf{c}\rVert^2 - r^2), 1\big)\in\mathbb{R}^{n+2},
	\end{aligned}
\end{equation}
where  $\textbf{c} =(c_1, \dots, c_n)\in\mathbb{R}^n$ is the hypersphere center and $r\in\mathbb{R}$ is the radius, their scalar product $X \cdot S$ in the conformal space $\mathbb{ME}^n$ can be computed equivalently in $\mathbb{R}^{n+2}$ as $\textbf{\textit{X}}^\top \textbf{\textit{S}}$:
\begin{equation}
    \label{scalar_product_isomorphism}
    X \cdot S = \textbf{\textit{X}}^\top \textbf{\textit{S}} = -\frac{1}{2}\norm{\textbf{x}-\textbf{c}}^2 + \frac{1}{2}r^2~.
\end{equation}
  This result enables the implementation of a hypersphere neuron in $\mathbb{ME}^n$ using the standard dot product in $\mathbb{R}^{n+2}$. The hypersphere vector components are treated as independent learnable parameters during training. Thus, a spherical classifier effectively learns \textit{non-normalized} hyperspheres of the form $\widetilde{\textbf{\textit{S}}} = ({s_1}, \dots, {s_{n+2}}) \in \mathbb{R}^{n+2}$.
 Due to the homogeneity of the representation, both normalized and non-normalized hyperspheres represent the same decision surface. More details can be found in Section 3.2 in the work of \citet{melnyk2020embed}. 

The geometric neuron is a generalization of the hypersphere neuron for point sets as input, see Figure~\ref{fig:geometric_neuron}. 
A single geometric neuron output is thus the sum of the signed distances of $K$ input points to $K$ learned hyperspheres
\begin{equation}
	\label{eq:geometric_neuron}
	z = \sum_{k=1}^{K}\gamma_k \;  \textbf{\textit{X}}_k^\top \textbf{\textit{S}}_{k}\;,
\end{equation}
where $z \in \mathbb{R}$, ${\textbf{\textit{X}}}_k \in \mathbb{R}^{5}$ is a properly embedded 3D input point, $\gamma_k \in \mathbb{R}$ is the scale factor, i.e., the last element of the learned parameter vector $\widetilde{\textbf{\textit{S}}}_k$, and $\textbf{\textit{S}}_k = \widetilde{\textbf{\textit{S}}}_k / \gamma_k \in \mathbb{R}^{5}$ are the corresponding normalized learned parameters (spheres). 

Furthermore, \citet{melnyk2020embed} demonstrated that the hypersphere (and geometric) neuron activations are isometric in 3D. That is, rotating the input is equivalent to rotating the decision spheres. This result is a necessary condition to consider rotation and translation equivariance of models constructed with spherical neurons and forms the foundation of our methodology.

In the following sections, we use the same notation for a 3D rotation $\textbf{\textit{R}}$ represented in the Euclidean space $\mathbb{R}^3$, the homogeneous (projective) space $\textit{P}(\mathbb{R}^3)$, and $\mathbb{ME}^{3}\cong\mathbb{R}^5$, depending on the context. This is possible since we can append the required number of ones to the diagonal of the original rotation matrix without changing the transformation representation.

\section{Method}
\label{methodology}
In this section, we identify the conditions under which a spherical neuron as a function of its 3D input can be steered. In other words, we derive an expression that gives us the response of a \textit{hypothetical} spherical neuron for some input, using rotated versions of the learned spherical neuron parameters.
 We start by considering the steerability conditions for a single sphere classifying the corresponding input point $\textbf{\textit{X}}$, i.e., $f(\textbf{\textit{X}})=\textbf{\textit{X}}^\top \textbf{\textit{S}}$, where $\textbf{\textit{X}}$ and $\textbf{\textit{S}}$ are embedded in $\mathbb{ME}^{3}\cong\mathbb{R}^5$ according to (\ref{hypersphere_in_r}). 

\subsection{Basis construction}
\label{steerable_basis}
\begin{figure}
	\centering
	\includegraphics{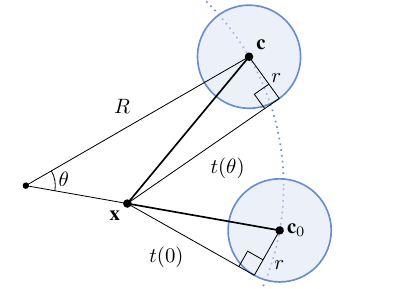}
	\caption{The effect of rotation on the spherical neuron activation in 2D; $t(\theta)$ denotes the tangent length as a function of rotation angle, \ie, $t(0)^2 = \norm{\textbf{x}-\textbf{c}_0}^2 - r^2 = -2\textbf{\textit{X}}^\top \textbf{\textit{S}}$.}
	\label{fig:2d_equivariance}
	\vspace{-5pt}
\end{figure}

To formulate a steerability constraint for a spherical neuron that has one sphere as a decision surface, we first need to determine the minimum number of basis functions, \ie, the number of terms $M$ in (\ref{eq:3d_steerability_constraint}).
This number only depends on the degree of the spherical harmonics that are required to compute the steered result \citep{freeman1991design}. Thus, we need to determine the required degrees.
\begin{theorem}
\label{th:the_theorem}
Let $\textbf{\textit{S}}\in \mathbb{R}^{n+2}$ be an $nD$ spherical classifier with center $\cm_0\in\mathbb{R}^n$ ($R:=\norm{\cm_0}$) and radius $r$, and $\xm\in\mathbb{R}^n$ be a point represented by $\textbf{\textit{X}}\in \mathbb{R}^{n+2}$, see \eqref{hypersphere_in_r}. Let further $\textbf{\textit{S}}'$ be the classifier that is obtained by rotating $\textbf{\textit{S}}$ in $nD$ space (\ie, using an element of $\SO(n)$), then $\textbf{\textit{X}}^\top\textbf{\textit{S}}'$ and $\textbf{\textit{X}}^\top\textbf{\textit{S}}$ are related by spherical harmonics up to first degree.
\end{theorem}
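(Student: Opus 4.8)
The plan is to compute the activation $\Xm^\top\textbf{\textit{S}}'$ explicitly and to observe that its dependence on the rotation reduces to a constant plus a term that is linear in the rotated center; these are precisely the degree-$0$ and degree-$1$ spherical harmonics. First I would spell out how an element of $\SO(n)$ acts on the embedded sphere $\textbf{\textit{S}}$ from \eqref{hypersphere_in_r}. Appending identity entries to the diagonal (as described at the end of Section~\ref{spherical_classifiers}), the rotation acts only on the center block, sending $\cm_0\mapsto\Rb\cm_0$, while the last two coordinates $\tfrac{1}{2}(\norm{\cm_0}^2-r^2)$ and $1$ are left unchanged, since $\norm{\Rb\cm_0}=\norm{\cm_0}=R$ is rotation-invariant. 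Hence $\textbf{\textit{S}}'$ is the embedding of a sphere with the same radius $r$ but with rotated center $\Rb\cm_0$.

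Next I would evaluate the activation using the scalar-product identity \eqref{scalar_product_isomorphism}, which gives $\Xm^\top\textbf{\textit{S}}' = -\tfrac{1}{2}\norm{\xm-\Rb\cm_0}^2+\tfrac{1}{2}r^2$. Expanding the square and using $\norm{\Rb\cm_0}^2=R^2$ isolates the only rotation-dependent contribution, namely the inner product $\xm^\top(\Rb\cm_0)$; the remaining terms $\norm{\xm}^2$, $R^2$, and $r^2$ are rotation-invariant and coincide with those appearing in $\Xm^\top\textbf{\textit{S}}$. Thus $\Xm^\top\textbf{\textit{S}}'-\Xm^\top\textbf{\textit{S}} = \xm^\top(\Rb\cm_0-\cm_0)$, so the entire rotation dependence of the activation is affine in the rotated center.

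The key step is then to read this off as a spherical-harmonic expansion. Because a sphere is symmetric about its own center, the only effect of $\Rb$ on the decision surface is the placement of that center, so the distinct rotated classifiers are parametrized by $\Rb\cm_0$, which sweeps over the sphere of radius $R$ in $\mathbb{R}^n$. Writing $\Rb\cm_0=R\,\hat{\cm}$ with $\hat{\cm}\in S^{n-1}$, the activation becomes $\Xm^\top\textbf{\textit{S}}' = \text{const}+R\,\xm^\top\hat{\cm}$ viewed as a function on $S^{n-1}$. The constant is a degree-$0$ harmonic and the linear map $\hat{\cm}\mapsto\xm^\top\hat{\cm}$ is, by definition, a degree-$1$ harmonic, with no higher degrees occurring. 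This establishes that $\Xm^\top\textbf{\textit{S}}'$ and $\Xm^\top\textbf{\textit{S}}$ are related by spherical harmonics up to first degree.

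The calculation itself is routine linear algebra; I expect the main subtlety to be the conceptual one of identifying the correct domain for the harmonic expansion, namely that rotations about the center act trivially, so the relevant variable is the center direction $\hat{\cm}\in S^{n-1}$ rather than the full group $\SO(n)$. Confirming that the affine-in-$\hat{\cm}$ form carries exactly degrees $0$ and $1$ is what fixes the minimal number of basis functions in the subsequent steerability constraint, yielding $\sum_{\ell=0}^{1}(2\ell+1)=4$ terms in the 3D case, in agreement with the tetrahedron construction of Section~\ref{filter_bank}.
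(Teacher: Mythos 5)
Your proposal is correct, and it reaches the same underlying fact as the paper -- that the only rotation-dependent part of the activation is the cross term in the expansion of the squared distance -- but it packages the argument differently. The paper fixes a plane of rotation $\pmb{\pi}$, splits $\xm-\cm_0$ into components in and orthogonal to $\pmb{\pi}$, and applies the law of cosines to exhibit the rotation dependence explicitly as $2\norm{\xm_{\pmb{\pi}}}\norm{\cm_{0\,\pmb{\pi}}}\cos(\phi+\theta)$, i.e., a first-degree circular harmonic in the rotation angle $\theta$; this mirrors the classical 2D steerability picture of Freeman and Adelson. You instead act with the full element of $\SO(n)$, expand $-\tfrac{1}{2}\norm{\xm-\Rb\cm_0}^2+\tfrac{1}{2}r^2$ directly, and observe that the activation is $\mathrm{const}+R\,\xm^\top\hat{\cm}$ as a function of the center direction $\hat{\cm}\in S^{n-1}$ -- a degree-$0$ plus degree-$1$ spherical harmonic on the sphere swept by $\Rb\cm_0$. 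Your parametrization by $\hat{\cm}$ rather than by a rotation angle is a genuine (if modest) simplification: it avoids introducing the auxiliary angle $\phi$ and the plane decomposition, makes the "rotations about the center act trivially" point explicit, and connects more directly to the count $\sum_{\ell=0}^{1}(2\ell+1)=4$ of basis functions used in the tetrahedron construction; the paper's version, in exchange, displays the harmonic dependence on the group parameter itself, which is the form in which the steering theorems of Freeman and Adelson are stated. Both are complete proofs of the theorem as stated.
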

\begin{proof}
Without loss of generality, the rotation is defined by the plane of rotation $\pmb{\pi}$ and the angle $\theta$. Denote the projection of a vector $\textbf{v}\in\mathbb{R}^n$ 
on $\pmb{\pi}$ 
by $\textbf{v}_{\pmb\pi}$ 
and define $\textbf{v}_{\perp{\pmb\pi}}=\textbf{v}-\textbf{v}_{\pmb\pi}$.
From \eqref{scalar_product_isomorphism} we obtain
\[
\begin{aligned}
2\textbf{\textit{X}}^\top \textbf{\textit{S}} &= r^2 - \norm{\textbf{x}-\textbf{c}_{0}}^2  \\&= r^2 - \norm{(\textbf{x}-\textbf{c}_{0})_{\perp\pmb{\pi}}}^2 - \norm{(\textbf{x}-\textbf{c}_{0})_{\pmb{\pi}}}^2\enspace.
\end{aligned}
\]
A rotation in $\pmb{\pi}$ only affects the rightmost term above and there exists a $\phi\in[0,2\pi)$ such that
\[
\begin{aligned}
 \norm{(\textbf{x}-\textbf{c}_{0})_{\pmb{\pi}}}^2 &= \norm{\textbf{x}_{\pmb{\pi}}-\textbf{c}_{0 \, \pmb{\pi}}}^2 \\&= \norm{\textbf{x}_{\pmb{\pi}}}^2+\norm{\textbf{c}_{0 \, \pmb{\pi}}}^2 - 2 \norm{\textbf{x}_{\pmb{\pi}}}\,\norm{\textbf{c}_{0 \, \pmb{\pi}}}\,\cos\phi\enspace.
 \end{aligned}
\]
With a similar argument, we obtain
\[
\begin{aligned}
2\textbf{\textit{X}}^\top \textbf{\textit{S}}' =  &~r^2 - \norm{(\textbf{x}-\textbf{c}_{0})_{\perp\pmb{\pi}}}^2  \\&- \norm{\textbf{x}_{\pmb{\pi}}}^2-\norm{\textbf{c}_{0 \, \pmb{\pi}}}^2 + 2 \norm{\textbf{x}_{\pmb{\pi}}}\,\norm{\textbf{c}_{0 \, \pmb{\pi}}}\,\cos(\phi+\theta).
\end{aligned}
\]
\qedhere
\end{proof}

\begin{figure}
	\centering
	\includegraphics[width=0.9\linewidth]{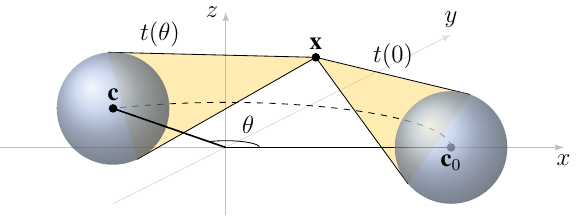}
	\caption{The effect of rotation on the spherical neuron activation in 3D; $t(\theta)$ denotes the tangent length as a function of rotation angle.}
	\label{fig:3d_equivariance}
	\vspace{-5pt}
\end{figure}

This result is valid in any dimension, but we are primarily interested in $n=2$, as illustrated in Figure~\ref{fig:2d_equivariance} for the case $\phi=0$, and $n=3$, shown in Figure~\ref{fig:3d_equivariance}.

Following the result of Theorem 4 in \citet{freeman1991design} and using $N=1$, we have that $M = (N+1)^2 = 4$ basis functions suffice in the 3D case \eqref{eq:3d_steerability_constraint}.

\subsection{Spherical filter banks in 3D}
\label{filter_bank}
In 3D, we thus select four rotated versions of the function $f(\textbf{\textit{X}})$, as the basis functions. The rotations $\{\textbf{\textit{R}}_j\}_{j=1}^{4}$ must be chosen to satisfy the condition (b) in Theorem 4 \citep{freeman1991design}. Therefore, we transform $f(\textbf{\textit{X}})$ such that the resulting four spheres are spaced in three dimensions equally, \ie, form a regular tetrahedron with the vertices $(1,1,1)$, $(1,-1,-1)$, $(-1, 1,-1)$, and $(-1, -1, 1)$, as shown in Figure~\ref{fig:tetrahedron}. We stack the homogeneous coordinates of the tetrahedron vertices $\textbf{m}_j$ in a matrix column-wise (scaled by $1/2$) to get the orthogonal matrix
\begin{equation}
	\label{eq:basis_matrix}
	\textbf{M}  = 
	\begin{bmatrix}
		\textbf{m}_1 & \textbf{m}_2 & \textbf{m}_3 & \textbf{m}_4	\end{bmatrix} = \frac{1}{2}
	\begin{bmatrix}
		1 &  \phantom{-}1 & -1             &  -1   \\
		1 & -1            &  \phantom{-}1  &  -1   \\
	    1 & -1            & -1             &  \phantom{-}1   \\
	    1 &  \phantom{-}1 &  \phantom{-}1  &  \phantom{-}1   \\
	\end{bmatrix} ~.
\end{equation}
We will use this matrix operator $\textbf{M}$ to compute the linear coefficients in the vector space generated by the vertices of the regular tetrahedron \citep{diva2:302469}. This will be necessary to find the appropriate interpolation functions and formulate the steerability constraint in Section~\ref{our_steerability_constraint}.

The four rotated versions of the function  $f(\textbf{\textit{X}})$ will constitute the basis functions that we call a spherical filter bank. To construct this filter bank, we choose the following convention. The originally learned spherical classifier  $f(\textbf{\textit{X}})=\textbf{\textit{X}}^\top \textbf{\textit{S}}$ is first rotated to $\norm{\textbf{c}_0}\cdot(1,1,1)$ (see Figure~\ref{fig:tetrahedron}) with the corresponding (geodesic) transformation denoted as $\textbf{\textit{R}}_O$. Next, we rotate the transformed sphere into the other three vertices of the regular tetrahedron and transform back to the original coordinate system (see Figure~\ref{fig:diagram} for the case of $(1, -1, -1)$).

\begin{figure}
	\centering\hfill
	\includegraphics[height=30mm]{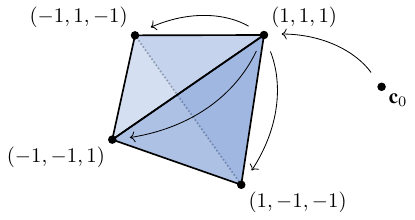}\hfill\mbox{}
	\caption{A regular tetrahedron as a basis. Without loss of generality, assume $\norm{\textbf{c}_0} = \sqrt{3}$.}
	\label{fig:tetrahedron}
	\vspace{-5pt}
\end{figure}

The resulting filter bank for one spherical classifier is thus composed as the following $4 \times 5$ matrix:
\begin{equation}
	\label{eq:sphere_filter_bank}
	B(\textbf{\textit{S}}) = 
	\begin{bmatrix}
		\textbf{\textit{R}}_O^{\top}\, \textbf{\textit{R}}_{T_i}\, \textbf{\textit{R}}_O\, \textbf{\textit{S}} \\
	\end{bmatrix}_{i={0\ldots3}} ~,
\end{equation}
where each of $\{\textbf{\textit{R}}_{T_i}\}_{i=0}^{3}$ is the rotation isomorphism in $\mathbb{R}^5$ corresponding to a 3D rotation from $(1,1,1)$ to the vertex $i+1$ of the regular tetrahedron. Therefore, $\textbf{\textit{R}}_{T_0}=\textbf{I}_5$.

As we show by Theorem~\ref{th:the_second_theorem}, the spherical filter bank $B(\textbf{\textit{S}})$ is equivariant under 3D rotations.

\begin{theorem}
\label{th:the_second_theorem}
Let $\textbf{\textit{S}}\in \mathbb{R}^{5}$ be a 3$D$ spherical classifier (\ie, the hypersphere neuron) and $\xm\in\mathbb{R}^3$ be an input point represented by $\textbf{\textit{X}}\in \mathbb{R}^{5}$, see \eqref{hypersphere_in_r}. Let further 
$B(\textbf{\textit{S}})\in \mathbb{R}^{4 \times 5}$ be a filter bank obtained according to \eqref{eq:sphere_filter_bank}. 
Then the filter bank output 
$\ym = B(\textbf{\textit{S}}) \textbf{\textit{X}} \in \mathbb{R}^{4}$ is equivariant to 3D rotations of $\xm$.
\end{theorem}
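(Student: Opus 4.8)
The plan is to make the first-degree harmonic structure of Theorem~\ref{th:the_theorem} explicit and then read off a linear $\SO(3)$-representation carried by $\ym$. First I would evaluate a single row of the filter bank \eqref{eq:sphere_filter_bank}. Since $\textbf{\textit{R}}_O$ and each $\textbf{\textit{R}}_{Ti}$ are orthogonal in $\mathbb{R}^5$, the $j$th output $y_j=\textbf{\textit{X}}^\top(\textbf{\textit{R}}_O^\top\textbf{\textit{R}}_{Tj}\textbf{\textit{R}}_O\textbf{\textit{S}})$ is just the activation of the original sphere recentered at the $j$th tetrahedron direction, evaluated at $\xm$. Substituting into \eqref{scalar_product_isomorphism} exactly as in the proof of Theorem~\ref{th:the_theorem} with $n=3$ gives
\[
y_j = \tfrac12\!\left(r^2-\norm{\xm}^2-R^2\right) + R\,\xm^\top \textbf{u}_j ,
\]
where $R=\norm{\cm_0}$ and $\textbf{u}_j=\textbf{\textit{R}}_O^\top\hat{\textbf{m}}_j$ is the unit center direction of the $j$th rotated sphere. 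The decisive point is that the degree-$0$ term is the same for all four rows and depends only on $\norm{\xm}$, whereas the degree-$1$ term is linear in $\xm$.

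Second I would collect the four outputs into matrix form $\ym=\textbf{W}\,[\,a_0,\;(R\xm)^\top\,]^\top$, where $a_0=\tfrac12(r^2-\norm{\xm}^2-R^2)$ and the $j$th row of $\textbf{W}\in\mathbb{R}^{4\times4}$ is $[\,1,\ \textbf{u}_j^\top\,]$. Because the $\textbf{u}_j$ are the vertex directions of the regular tetrahedron rigidly rotated by $\textbf{\textit{R}}_O$, the matrix $\textbf{W}$ factors as a column permutation and rescaling of $\textbf{M}^\top$ from \eqref{eq:basis_matrix} followed by the block-diagonal rotation $\mathrm{diag}(1,\textbf{\textit{R}}_O)$; since $\textbf{M}$ is orthogonal and $\textbf{\textit{R}}_O\in\SO(3)$, every factor is invertible and hence so is $\textbf{W}$.

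Third I would rotate the input. A rotation $\textbf{\textit{R}}\in\SO(3)$ of $\xm$ corresponds to $\textbf{\textit{X}}\mapsto\textbf{\textit{R}}\textbf{\textit{X}}$ in $\mathbb{R}^5$, which in the formula above leaves $a_0$ unchanged (rotations preserve $\norm{\xm}$) and sends $R\xm\mapsto R\textbf{\textit{R}}\xm$. Therefore
\[
\ym' = \textbf{W}\begin{bmatrix}1 & \textbf{0}^\top\\ \textbf{0} & \textbf{\textit{R}}\end{bmatrix}\begin{bmatrix}a_0\\ R\xm\end{bmatrix}
     = \textbf{W}\begin{bmatrix}1 & \textbf{0}^\top\\ \textbf{0} & \textbf{\textit{R}}\end{bmatrix}\textbf{W}^{-1}\,\ym =: D(\textbf{\textit{R}})\,\ym ,
\]
which is exactly the asserted equivariance: the output is acted on by $D(\textbf{\textit{R}})=\textbf{W}\,\mathrm{diag}(1,\textbf{\textit{R}})\,\textbf{W}^{-1}$. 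This $D$ is a genuine representation of $\SO(3)$ on $\mathbb{R}^4$ -- the direct sum of the trivial and standard representations conjugated by $\textbf{W}$ -- since $D(\textbf{\textit{R}}_1\textbf{\textit{R}}_2)=D(\textbf{\textit{R}}_1)D(\textbf{\textit{R}}_2)$.

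The step I expect to be the main obstacle is the invertibility of $\textbf{W}$ in the second paragraph, i.e. that the four degree-$0$/degree-$1$ samples taken at the tetrahedron vertices are linearly independent. This is precisely condition (b) of Theorem~4 in \citet{freeman1991design} and is what forces the regular-tetrahedron choice; I would discharge it by tying $\textbf{W}$ back to the orthogonal matrix $\textbf{M}$ of \eqref{eq:basis_matrix} rather than expanding a $4\times4$ determinant. Everything else is bookkeeping with orthogonal maps and the explicit harmonic expansion, and the concrete entries of $D(\textbf{\textit{R}})$ are what Section~\ref{our_steerability_constraint} then extracts as the interpolation coefficients $v(\textbf{\textit{R}})$ and the steerability constraint.
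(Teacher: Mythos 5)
Your proof is correct, and it rests on the same key fact as the paper's: that the four sphere centers sit at the vertices of a (rotated) regular tetrahedron, so the associated $4\times4$ matrix is invertible by tracing it back to the orthogonal matrix $\textbf{M}$ of \eqref{eq:basis_matrix}. The difference is one of explicitness rather than of route. The paper's proof is a two-line rank argument: rotations in $\mathbb{R}^5$ act only on the first three coordinates of $\textbf{\textit{X}}$, the left $4\times3$ sub-matrix $B_3(\textbf{\textit{S}})$ has rank $3$ because its rows are tetrahedron vertices, and the right sub-matrix contributes a rotation-invariant constant vector (this last point is stated only in the remark following the proof). You instead expand each row via \eqref{scalar_product_isomorphism} into its degree-$0$ and degree-$1$ parts, $y_j=a_0+R\,\xm^\top\textbf{u}_j$, and exhibit the output-space representation $D(\textbf{\textit{R}})=\textbf{W}\,\mathrm{diag}(1,\textbf{\textit{R}})\,\textbf{W}^{-1}$ explicitly. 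This buys more than the paper's proof delivers: it makes the equivariance statement precise (which representation acts on $\ym$), it shows that $D$ is the conjugate of the direct sum of the trivial and standard representations, and after unwinding the permutation/rescaling relating $\textbf{W}$ to $\textbf{M}^\top\mathrm{diag}(1,\textbf{\textit{R}}_O)$ it recovers $V_{\textbf{\textit{R}}}$ of \eqref{eq:V_for_one_point}, i.e., your argument subsumes the Proposition that the paper proves separately. The only cosmetic issues are the slight abuse of writing $\hat{\textbf{m}}_j$ for the normalized Euclidean part of the homogeneous vertex coordinates, and the implicit assumption $\cm_0\neq\textbf{0}$ needed for $\textbf{\textit{R}}_O$ to be well defined --- an assumption the paper makes implicitly as well, and in the degenerate case $R=0$ the output is constant across the bank and trivially invariant, so nothing breaks.
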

\begin{proof}
Since rotations in $\mathbb{R}^3$ are embedded into $\mathbb{R}^5$ acting on the first three components (for  details, see \citet{melnyk2020embed}), we need to show that the left sub-matrix $B_3(\textbf{\textit{S}})\in \mathbb{R}^{4 \times 3}$ of $B(\textbf{\textit{S}})$ is of rank 3. By construction, the four rows of $B_3(\textbf{\textit{S}})$ form the vertices of a tetrahedron, thus span $\mathbb{R}^3$.
\qedhere
\end{proof}
Note that the right sub-matrix $B_2(\textbf{\textit{S}})\in \mathbb{R}^{4 \times 2}$ of $B(\textbf{\textit{S}})$ results in a constant vector $B_2(\textbf{\textit{S}})[x_4, x_5]^\top \in \mathbb{R}^{4}$ independent of the rotation, where $x_4$ and $x_5$ are the last two components of $\textbf{\textit{X}}$ (see \eqref{hypersphere_in_r}). This additional constant vector requires the use of the fourth vertex of the tetrahedron.

Furthermore, we can explicitly show how the representation $V_{\textbf{\textit{R}}} \in \mathbb{R}^{4\times4}$ of the 3D rotation \textbf{\textit{R}} can be obtained in the filter bank $B(\textbf{\textit{S}})$ output space.

\begin{proposition}
Let $\textup{\textbf{M}}$ be the $4\times4$ orthogonal matrix defined in \eqref{eq:basis_matrix}, $\textbf{\textit{R}}_O^k \in \mathbb{R}^{4\times4}$ be the $\textit{P}(\mathbb{R}^3)$ representation (constructed by appending ones to the main diagonal -- see the last paragraph in Section~3.3) of the 3D rotation (geodesic) from the center $\cm_0^k$ of the learned \textup{k}th spherical classifier $\textbf{\textit{S}}_k$ to $\norm{\cm_0^k} \cdot (1,1,1)$, and, slightly abusing notation, $\textbf{\textit{R}} \in \mathbb{R}^{4\times4}$ be the $\textit{P}(\mathbb{R}^3)$ representation of the 3D rotation $\textbf{\textit{R}}$.

Then $V^k_{\textbf{\textit{R}}} \in \mathbb{R}^{4\times4}$ defined below is the representation of the 3D rotation \textbf{\textit{R}} in the filter bank $B(\textbf{\textit{S}}_k)$ output space:
\begin{equation}
	\label{eq:V_for_one_point}
	V^k_{\textbf{\textit{R}}} = \textup{\textbf{M}}^\top \textbf{\textit{R}}_O^k\, \textbf{\textit{R}}\, \textbf{\textit{R}}_O^{k\top} \textup{\textbf{M}} ~.
\end{equation}
\end{proposition}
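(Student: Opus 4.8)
The plan is to verify directly that the stated $V^k_{\textbf{\textit{R}}}$ intertwines the filter-bank map applied to the rotated and unrotated input, \ie\ that $\ym(\textbf{\textit{R}}\xm) = V^k_{\textbf{\textit{R}}}\,\ym(\xm)$ with $\ym(\xm) = B(\textbf{\textit{S}}_k)\,\textbf{\textit{X}}$, and then to note that $\textbf{\textit{R}}\mapsto V^k_{\textbf{\textit{R}}}$ is a homomorphism. First I would rewrite the output in an explicit geometric form. By the scalar-product isomorphism \eqref{scalar_product_isomorphism} each component is $y_i = -\tfrac12\norm{\xm-\cm_i}^2 + \tfrac12 r^2$, where $\cm_i$ is the center of the $i$th rotated sphere and $r$ its radius. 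Since every filter is a rotated copy of the \emph{same} $\textbf{\textit{S}}_k$, all four share the common radius $r$ and the common center norm $\norm{\cm_i}=\norm{\cm_0^k}$. Expanding the square then gives $y_i = \xm^\top\cm_i + a$ with a single scalar $a = -\tfrac12\norm{\xm}^2 - \tfrac12\norm{\cm_0^k}^2 + \tfrac12 r^2$ that is identical for all $i$ and, crucially, depends on $\xm$ only through $\norm{\xm}$, hence is invariant under rotation.

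Second, I would identify the centers with the tetrahedron vertices encoded in $\textbf{M}$. By the isometry property (Section~3.3), the center of $\textbf{\textit{R}}_O^{k\top}\textbf{\textit{R}}_{Ti}\textbf{\textit{R}}_O^k\textbf{\textit{S}}_k$ is $\textbf{\textit{R}}_O^{k\top}$ applied to the $(i{+}1)$th tetrahedron vertex: taking $\norm{\cm_0^k}=\sqrt3$ as in Figure~\ref{fig:tetrahedron}, we have $\textbf{\textit{R}}_O^k\cm_0^k=(1,1,1)$ and $\textbf{\textit{R}}_{Ti}$ carries it to vertex $i{+}1$. Appending the homogeneous coordinate $1$ and reading off \eqref{eq:basis_matrix}, the homogeneous center $(\cm_i,1)$ equals $\textbf{\textit{R}}_O^{k\top}$ times column $i{+}1$ of $2\textbf{M}$ (whose fourth row is all ones). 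Collecting these as rows, the $4\times4$ matrix $\widehat{\textbf{C}}$ with rows $(\cm_i^\top,1)$ satisfies $\widehat{\textbf{C}} = 2\textbf{M}^\top\textbf{\textit{R}}_O^k$, where $\textbf{\textit{R}}_O^k$ is the $\textit{P}(\mathbb{R}^3)$ representation that fixes the homogeneous coordinate.

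Third, I would package $y_i = \cm_i^\top\xm + a$ as a single linear map in homogeneous coordinates by setting $\widehat{\xm} := (\xm, a)$, so that $\ym = \widehat{\textbf{C}}\,\widehat{\xm} = 2\textbf{M}^\top\textbf{\textit{R}}_O^k\,\widehat{\xm}$. Because a 3D rotation acts on $\widehat{\xm}$ through the $\textit{P}(\mathbb{R}^3)$ representation $\textbf{\textit{R}}$, fixing the fourth (rotation-invariant) coordinate $a$, rotating the input gives $\ym(\textbf{\textit{R}}\xm) = 2\textbf{M}^\top\textbf{\textit{R}}_O^k\,\textbf{\textit{R}}\,\widehat{\xm}$. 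The claim then follows from the orthogonality relations $\textbf{M}\textbf{M}^\top = \textbf{I}$ and $\textbf{\textit{R}}_O^{k\top}\textbf{\textit{R}}_O^k = \textbf{I}$: substituting $V^k_{\textbf{\textit{R}}} = \textbf{M}^\top\textbf{\textit{R}}_O^k\,\textbf{\textit{R}}\,\textbf{\textit{R}}_O^{k\top}\textbf{M}$ into $V^k_{\textbf{\textit{R}}}\,\ym(\xm) = V^k_{\textbf{\textit{R}}}\,2\textbf{M}^\top\textbf{\textit{R}}_O^k\widehat{\xm}$ collapses the inner product $\textbf{\textit{R}}_O^{k\top}\textbf{M}\textbf{M}^\top\textbf{\textit{R}}_O^k$ to $\textbf{I}$ and yields exactly $2\textbf{M}^\top\textbf{\textit{R}}_O^k\textbf{\textit{R}}\,\widehat{\xm} = \ym(\textbf{\textit{R}}\xm)$. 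The very same cancellation shows $V^k_{\textbf{\textit{R}}_1}V^k_{\textbf{\textit{R}}_2}=V^k_{\textbf{\textit{R}}_1\textbf{\textit{R}}_2}$, confirming $V^k_{\textbf{\textit{R}}}$ is a genuine representation.

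I expect the main obstacle to be the bookkeeping in the second and third steps rather than any deep difficulty: one must verify that the scalar $a$ is simultaneously (i) identical across the four filters and (ii) rotation-invariant, so that it can legitimately serve as the fixed fourth homogeneous coordinate on which $\textbf{\textit{R}}\in\textit{P}(\mathbb{R}^3)$ acts trivially. Getting the normalization right -- the factor $2$ relating the vertices to the columns of $\textbf{M}$, together with the assumption $\norm{\cm_0^k}=\sqrt3$ that places the rotated centers exactly at the tetrahedron vertices -- is the one place where an error would break the clean cancellation; once $\ym$ is written as $2\textbf{M}^\top\textbf{\textit{R}}_O^k\widehat{\xm}$, the remainder is immediate from orthogonality.
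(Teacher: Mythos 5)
Your proof is correct, and it takes a genuinely different --- and in fact more informative --- route than the paper's. The paper's own proof never verifies the intertwining relation: it only observes that, since $\det\textbf{M}=1$ so $\textbf{M}\in\SO(4)$ and $\textbf{\textit{R}}_O^k$, $\textbf{\textit{R}}$ are rotations, $V^k_{\textbf{\textit{R}}}$ is an orthogonal conjugate of $\textbf{\textit{R}}$ (hence itself a rotation matrix), and that the map $\textbf{\textit{R}}\mapsto V^k_{\textbf{\textit{R}}}$ is invertible via \eqref{eq:R_to_V}; the actual equivariance identity \eqref{eq:filter_bank_equivariance} is then stated afterwards without proof. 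You instead prove that identity directly: writing each filter response as $y_i=\cm_i^\top\xm+a$ with a shared, rotation-invariant scalar $a$, identifying the rotated centers with the columns of $2\textbf{M}$ pulled back by $\textbf{\textit{R}}_O^{k\top}$ so that $\ym=2\textbf{M}^\top\textbf{\textit{R}}_O^k\widehat{\xm}$, and then cancelling $\textbf{\textit{R}}_O^{k\top}\textbf{M}\textbf{M}^\top\textbf{\textit{R}}_O^k=\textbf{I}$ is exactly the computation the paper leaves implicit. Your approach therefore buys the substantive content of the proposition (that $V^k_{\textbf{\textit{R}}}$ really acts on the filter bank output the way $\textbf{\textit{R}}$ acts on the input, plus the homomorphism property), whereas the paper's argument only buys the weaker facts that $V^k_{\textbf{\textit{R}}}$ is a rotation and that the correspondence is one-to-one. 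Two bookkeeping points you already flag are indeed harmless: the fourth coordinate of $\widehat{\xm}$ is the invariant scalar $a$ rather than $1$, which is fine because the $\textit{P}(\mathbb{R}^3)$ form of $\textbf{\textit{R}}$ fixes that coordinate and $a$ depends on $\xm$ only through $\norm{\xm}$; and the normalization $\norm{\cm_0^k}=\sqrt3$ is genuinely without loss of generality, since for general $\norm{\cm_0^k}$ the centers are a uniform scalar multiple of the tetrahedron vertices and the diagonal scaling $\operatorname{diag}(\lambda,\lambda,\lambda,1)$ commutes with every $\textit{P}(\mathbb{R}^3)$ rotation, so the same cancellation goes through.
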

\begin{proof}
Since $\det{\textup{\textbf{M}}} = 1$, we have that $\textup{\textbf{M}} \in \SO(4)$, and thus, all terms on the right hand-side of \eqref{eq:V_for_one_point} are $4\times4$ rotation matrices with only one variable --- $\Rb$. 
Therefore, $V^k_{\Rb}$ is also a rotation matrix and is a unique map. The inverse transformation can be computed straightforwardly as
\begin{equation}
	\label{eq:R_to_V}
	\textbf{\textit{R}} = \textbf{\textit{R}}_O^{k^\top} \textbf{M}\, V^k_{\textbf{\textit{R}}}\, \textbf{M}^{\top} \textbf{\textit{R}}_O^{k}
\end{equation}
with subsequent extraction of the upper-left $3\times3$ sub-matrix as the original 3D rotation matrix $\textbf{\textit{R}}$.
Hence, there is a one-to-one mapping between $V^k_{\Rb}$ and $\Rb$.
\end{proof}
Thus, both $V_{\Rb}$ and $\Rb$ are representations of the 3D rotation, and we can write the rotation equivariance property of the spherical filter bank $B(\textbf{\textit{S}})$ as
\begin{equation}
\label{eq:filter_bank_equivariance}
    V_{\Rb} \, B(\textbf{\textit{S}}) \, \textbf{\textit{X}} = B(\textbf{\textit{S}})\,\textbf{\textit{R}}\textit{\textbf{X}}.
\end{equation}

\begin{figure}
	\centering
	\includegraphics[height=30mm]{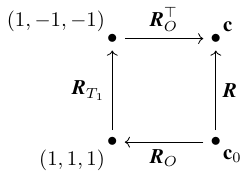}\hfill%
	\caption{A rotation from $\textbf{c}_0$ to $\textbf{c}$ described by a tetrahedron rotation. Without loss of generality, assume $\norm{\textbf{c}_0} = \sqrt{3}$.}
	\label{fig:diagram}
	\vspace{-5pt}
\end{figure}

\subsection{3D steerability constraint}
\label{our_steerability_constraint} 
The steerability constraint can be formulated as follows. For an arbitrary rotation $\textbf{\textit{R}}$ applied to the input of the function $f(\textbf{\textit{X}})$, we want the output of the spherical filter bank $B(\textbf{\textit{S}})$ in \eqref{eq:sphere_filter_bank} to be interpolated with $v_j(\textbf{\textit{R}})$ such that the response is equal to the original function output, \ie, 
\begin{equation}
	\label{eq:our_steerability_constraint}
	\begin{aligned}
	f(\textbf{\textit{X}}) = 	f^{\textbf{\textit{R}}}(\textbf{\textit{R}}\textbf{\textit{X}}) &= \sum_{j=1}^{M}v_j(\textbf{\textit{R}})\, f^{\textbf{\textit{R}}_{j}}(\textbf{\textit{R}}\textbf{\textit{X}})~\\&= v(\textbf{\textit{R}})^\top B(\textbf{\textit{S}})\,\textbf{\textit{R}}\textit{\textbf{X}},
	\end{aligned}
\end{equation}
where $\textit{\textbf{X}} \in \mathbb{R}^5$ is a single, appropriately embedded, 3D point, and $v(\textbf{\textit{R}}) \in \mathbb{R}^4$ is a vector of the interpolation coefficients.

The coefficients $v(\textbf{\textit{R}})$ should conform to the basis function construction (condition (b) in Theorem~4 in \citet{freeman1991design}), which is why they are computed with \textbf{M} defined in (\ref{eq:basis_matrix}).
Given $\textbf{\textit{X}} \in \mathbb{R}^5$ as input and an unknown rotation $\textbf{\textit{R}}$ acting on it, the steering equation (\ref{eq:our_steerability_constraint}) implies
\begin{equation}
	\label{eq:steerability_constraint_for_one_point}
v(\textbf{\textit{R}})^\top B(\textbf{\textit{S}})\,\textbf{\textit{R}} \textit{\textbf{X}} =  v(\textbf{I})^\top B(\textbf{\textit{S}})\,\textit{\textbf{X}}~.
\end{equation}
Given a tetrahedron rotation, e.g., $\textbf{\textit{R}}_{T_1}$, as shown in the diagram in Figure~\ref{fig:diagram}, we can define the unknown rotation accordingly as $\textbf{\textit{R}} = \textbf{\textit{R}}_O^{\top} \textbf{\textit{R}}_{T_1} \textbf{\textit{R}}_O$.
In this case, it is easy to see that to satisfy the constraint (\ref{eq:steerability_constraint_for_one_point}),   $v(\textbf{\textit{R}})$ must be $(0, 1, 0, 0)$, i.e., the second filter in the filter bank $B(\textbf{\textit{S}})$ must be chosen. This can be achieved by transforming a constant vector $\textbf{m}_1 = (\frac{1}{2}, \frac{1}{2}, \frac{1}{2}, \frac{1}{2})$ by rotation $\textbf{\textit{R}}_{T_1}$ and multiplying it by the basis matrix $\textbf{M}$ as follows:
\begin{equation}
	\label{eq:vs_for_rt1}
	\begin{aligned}
	v(\textbf{\textit{R}}) &= \textbf{M} ^\top (\textbf{\textit{R}}_{T_1}\,\textbf{m}_1) \\&= \textbf{M}^\top \Big(\frac{1}{2}, -\frac{1}{2}, -\frac{1}{2}, \frac{1}{2}\Big)  = (0, 1, 0, 0)~.
	\end{aligned}
\end{equation}
Note that, in general, a geometric neuron (\ref{eq:geometric_neuron}) takes a \textit{set} of embedded points as input. 
Therefore, with the setup above, the rotation of $\textbf{m}_1$ will be different for each input shape point $k$ if the same $v$ is used for all $k$, which
contradicts that the shape is transformed by a rigid body motion, i.e., the same $\textbf{\textit{R}}$ for all $k$.
Thus, we need to consider a suitable vector $v^k$ for each input shape point $k$, such that the resulting $\textbf{\textit{R}}$ is the same for all $k$. 
This can be achieved by recalling how we construct the basis functions in the spherical filter bank (\ref{eq:sphere_filter_bank}): we need to consider the respective initial rotation $\textbf{\textit{R}}_O^k$. The desired interpolation coefficients $v^k$ are thus computed as
\begin{equation}
	\label{eq:vs_for_one_point}
	v^k(\textbf{\textit{R}}) = \textbf{M}^\top \,(\textbf{\textit{R}}_O^k\, \textbf{\textit{R}}\, \textbf{\textit{R}}_O^{k\top} \textbf{m}_1) ~.
\end{equation}
The resulting $v^k(\textbf{\textit{R}}) \in \mathbb{R}^4$ interpolate the responses of the tetrahedron-copies of the originally learned sphere $\textbf{\textit{S}}_k$ to replace the rotated sphere.

Remarkably, the interpolation vector $v^k(\Rb)$ is the first column of $V^k_{\textbf{\textit{R}}}$ \eqref{eq:V_for_one_point}. 
Note that $v^k(\textbf{\textit{R}})$ has only three degrees of freedom since the four vertices of the regular tetrahedron sum to zero (see Figure~\ref{fig:tetrahedron}). Thus, $v^k(\textbf{\textit{R}})$ is equivariant under 3D rotations.
 
By plugging \eqref{eq:our_steerability_constraint} into \eqref{eq:geometric_neuron}, we can now establish the steerability constraint for a geometric neuron, $f_{\text{GN}}$, which takes a set of $K$ embedded points $\{\textbf{\textit{X}}_1,\ldots,\textbf{\textit{X}}_k\}$ as input: 
\begin{equation}
	\label{eq:gn_steerability_constraint}
	\begin{aligned}
	f_{\text{GN}}^{\textbf{\textit{R}}}(\textbf{\textit{R}} \textbf{\textit{X}}) &= \sum_{k=1}^{K}\gamma_k\, f^{k\,\textbf{\textit{R}}}(\textbf{\textit{R}} \textbf{\textit{X}}_k) \\&= \sum_{k=1}^{K}\gamma_k\, v^k(\textbf{\textit{R}})^\top B(\textbf{\textit{S}}_{k})\,\textbf{\textit{R}}\textit{\textbf{X}}_k~.
	\end{aligned}
\end{equation}

 \subsection{Steerable model overview}
 To build a 3D steerable model, we perform the following steps (see Figure~\ref{fig:model_overview}): We first train an ancestor model, which consists of spherical neurons. After training the model parameters to classify data in a canonical orientation, we freeze them and transform the first layer weights according to the 3D steerability constraint \eqref{eq:our_steerability_constraint} (for hypersphere neurons) or \eqref{eq:gn_steerability_constraint} (for geometric neurons). Finally, by combining the resulting parameters in spherical filter banks according to \eqref{eq:sphere_filter_bank} and adding interpolation coefficients \eqref{eq:vs_for_one_point} as free parameters, we create a steerable model. If the interpolation coefficients are computed correctly, the model will produce rotation-invariant predictions.

\section{Experiments}
\label{demonstration}
In this section, we describe the experiments we conducted to confirm our findings presented in Section~\ref{methodology}.
\vspace{-2pt}

\subsection{Datasets}
\label{datasets}
\paragraph{3D Tetris} Following the experiments reported by \citet{thomas2018tensor}, \citet{weiler20183d}, and \citet{melnyk2020embed}, we use the following synthetic point set of eight 3D Tetris shapes \citep{thomas2018tensor} consisting of four points each (see, \eg, Figure 3 in \citet{melnyk2020embed}):

{\tiny
\begin{verbatim}
chiral_shape_1: [(0, 0, 0), (0, 0, 1), (1, 0, 0), (1,  1, 0)],
chiral_shape_2: [(0, 0, 0), (0, 0, 1), (1, 0, 0), (1, -1, 0)],
square:         [(0, 0, 0), (1, 0, 0), (0, 1, 0), (1,  1, 0)],
line:           [(0, 0, 0), (0, 0, 1), (0, 0, 2), (0,  0, 3)],
corner:         [(0, 0, 0), (0, 0, 1), (0, 1, 0), (1,  0, 0)],
L:              [(0, 0, 0), (0, 0, 1), (0, 0, 2), (0,  1, 0)],
T:              [(0, 0, 0), (0, 0, 1), (0, 0, 2), (0,  1, 1)],
zigzag:         [(0, 0, 0), (1, 0, 0), (1, 1, 0), (2,  1, 0)].
\end{verbatim}
}

\paragraph{3D skeleton data} We also perform experiments on real-world data to substantiate the validity of our theoretical results. We use the UTKinect-Action3D dataset introduced by \citet{xia2012view}, in particular, the 3D skeletal joint locations extracted from Kinect depth maps. For each action sequence and from each frame, we extract the skeleton consisting of twenty points and assign it to the class of actions (ten categories) this frame is part of. Therefore, we formulate the task as shape recognition (\ie, a kind of action recognition from a static point cloud), where each shape is of size $20 \times 3$.

Since the orientations of the shapes vary significantly across the sequences, we perform the following standardization: We first center each shape at the origin, and then compute the orientation from its three hip joints and de-rotate the shape in the \textit{xy}-plane (viewer coordinate system) accordingly. We illustrate the effect of de-rotation in Figure~\ref{fig:skeleton_data}.
From each action sequence, we randomly select 50\% of the skeletons for the test set and 20\% of the remainder as validation data. The resulting data split is as follows: 2295 training shapes, 670 shapes for validation, and 3062 test shapes, approximately corresponding to $38\%, 11\%, 51\%$ of the total amount of the skeletons, respectively.
\vspace{-2pt}
 \begin{figure}
	\centering
    \includegraphics[width=0.49\linewidth,trim={7cm 7cm 7cm 7cm},clip]{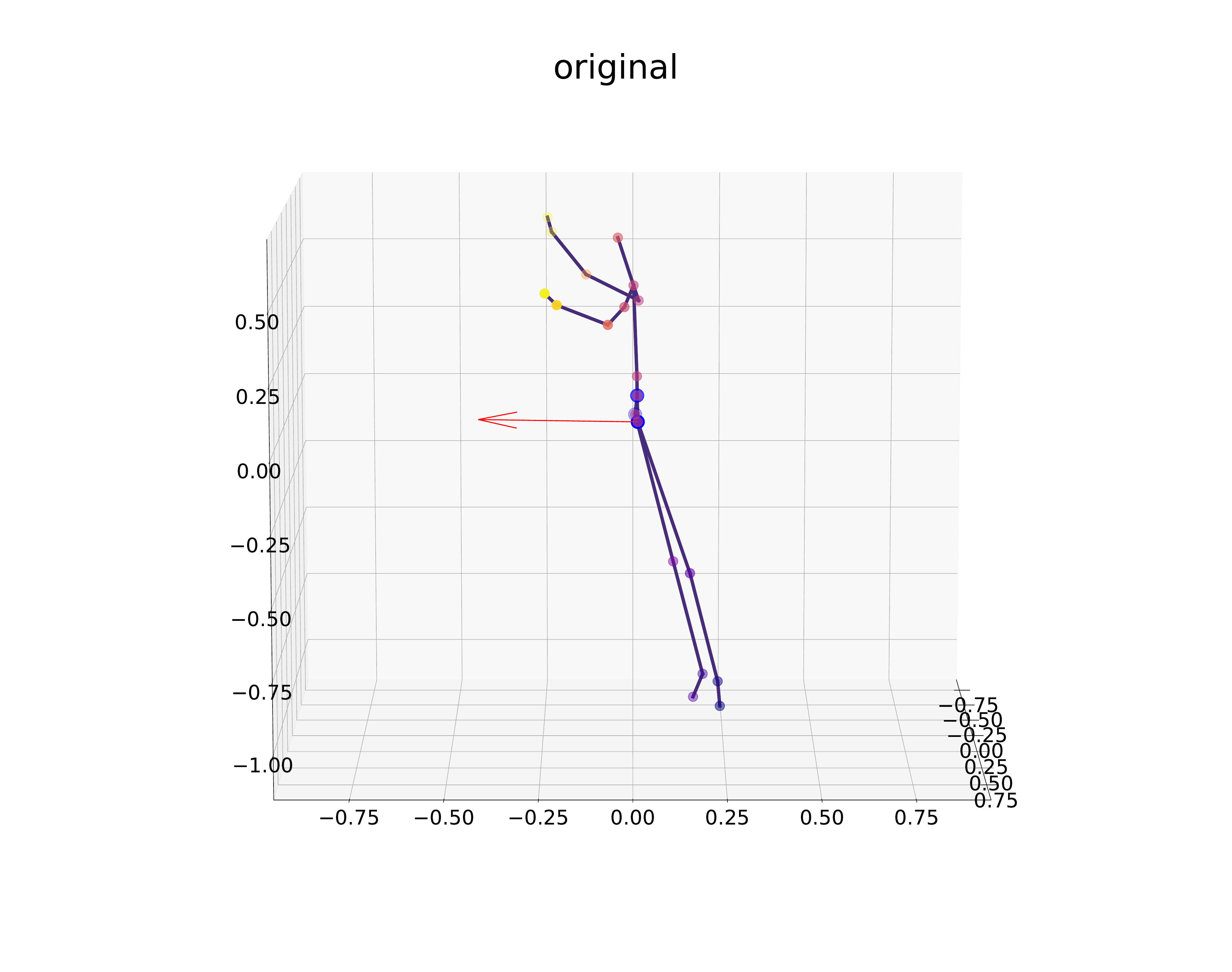}
	\includegraphics[width=0.49\linewidth,trim={7cm 7cm 7cm 7cm},clip]{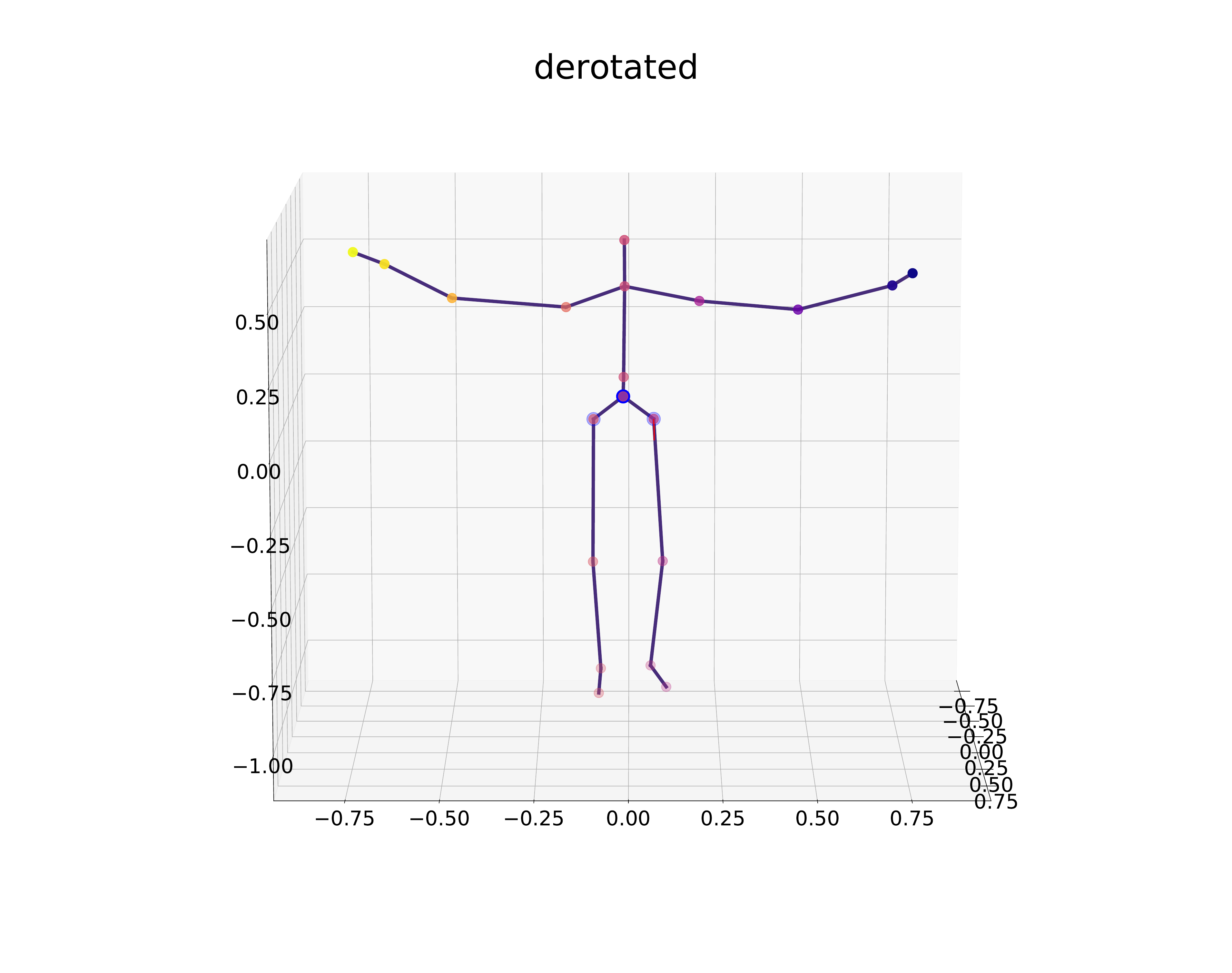}
	\caption{The effect of standardizing the orientation of the 3D skeleton representing \textit{waveHands} action: the original (left) and the de-rotated (right) shape; the arrow is the normal vector of the plane formed by the three hip joints.}
	\label{fig:skeleton_data}
	\vspace{-9pt}
\end{figure}

\subsection{Steerable model construction}
\label{steerable_model_construction}
To construct and test steerable models, we perform the same steps for both datasets. The minor differences are the choice of training hyperparameters and the presence of validation and test subsets in the 3D skeleton dataset.
\begin{table*}[t]
	\small
	\centering
	\caption{The steerable model classification accuracy for the distorted (the noise units are specified in the square brackets) rotated shapes and the ancestor accuracy for the distorted shapes in their canonical orientation (mean and std over 1000 runs, \%).}
	\label{tab:known_r_experiment}
	\begin{tabular}{l@{}cccl@{}cc}\\
		\toprule
		\multicolumn{3}{c}{3D Tetris} &&  \multicolumn{3}{c}{3D skeleton data (\textit{test} set)}\\
		\cmidrule(l){1-3} \cmidrule(l){5-7}
		Noise ($a$), [1]~~~~& \textit{Steerable} & \textit{Ancestor} && Noise ($a$), [m]~~~~& \textit{Steerable} & \textit{Ancestor}\\
		\midrule 
		 $0.00$ & $100.0\pm0.0$  & $100.0\pm0.0$ &&  $0.000$  &  $92.9\pm0.0$  & $92.9\pm0.0$ \\
		 
		 $0.05$ & $100.0\pm0.0$  & $100.0\pm0.0$ &&  $0.005$ &  $92.4\pm0.2$  & $92.4\pm0.2$\\
		 
		 $0.10$ & $100.0\pm0.0$  & $100.0\pm0.0$ &&  $0.010$ &  $91.1\pm0.3$  & $91.1\pm0.3$\\
		 
		 $0.20$ & $100.0\pm0.4$  & $100.0\pm0.0$ &&  $0.020$ &  $87.1\pm0.5$  & $87.1\pm0.5$\\
		 
		 $0.30$ & $99.7\pm1.9$   & $99.8\pm1.6$ &&  $0.030$ &  $82.3\pm0.6$  & $82.2\pm0.6$\\
		 
		 $0.50$ & $94.9\pm7.7$   & $95.0\pm7.9$ &&  $0.050$ &  $72.0\pm0.7$   & $71.9\pm0.7$\\
		\bottomrule
	\end{tabular}	
\end{table*}
\begin{table*}[htb!]
	\small
	\centering
	\caption{The L1 distance between the steerable model hidden activations and the ground truth activations given the distorted (the noise units are specified in the square brackets) rotated shapes and between the ancestor hidden activations given the distorted shapes in their canonical orientation (mean and std over 1000 runs).}
	\label{tab:known_r_experiment_l1}
	\begin{tabular}{l@{}cccl@{}cc}\\
		\toprule
		\multicolumn{3}{c}{3D Tetris} &&  \multicolumn{3}{c}{3D skeleton data (\textit{test} set)}\\
		\cmidrule(l){1-3} \cmidrule(l){5-7}
		Noise ($a$), [1]~~~~& \textit{Steerable} & \textit{Ancestor} && Noise ($a$), [m]~~~~&\textit{Steerable}& \textit{Ancestor}\\	
		\midrule 
		 $0.00$ & $0.00\pm0.00$  & $0.00\pm0.00$ &&    $0.000$  &  $0.00\pm0.00$  & $0.00\pm0.00$ \\
		 $0.05$ & $0.33\pm0.05$  & $0.33\pm0.05$ &&    $0.005$ &  $0.53\pm0.00$  & $0.53\pm0.00$\\
		 $0.10$ & $0.66\pm0.10$  & $0.66\pm0.10$ &&    $0.010$ &  $1.06\pm0.01$  & $1.06\pm0.01$\\
		 $0.20$ & $1.32\pm0.19$  & $1.32\pm0.19$ &&    $0.020$ &  $2.12\pm0.01$  & $2.12\pm0.01$\\
		 $0.30$ & $2.00\pm0.31$  & $2.00\pm0.29$ &&    $0.030$ &  $3.18\pm0.01$  & $3.18\pm0.01$\\
		 $0.50$ & $3.33\pm0.48$  & $3.32\pm0.48$ &&    $0.050$ &  $5.30\pm0.02$  & $5.30\pm0.02$\\
		\bottomrule
	\end{tabular}	
\end{table*}

We first train a two-layer (\textit{ancestor}) multilayer geometric perceptron (MLGP) model \citep{melnyk2020embed}, where the first layer consists of geometric neurons and the output layer of hypersphere neurons, to classify the shapes. 
Since the architecture choice is not the objective of the experiments, when building the MLGP, we use only one configuration with five hidden units for the Tetris data and twelve hidden units (determined by using the validation data) for the 3D skeleton dataset throughout the experiments. 

Similar to \citet{melnyk2020embed}, we do not use any activation function in the first layer due to the nonlinearity of the conformal embedding.
We implement both MLGP models in PyTorch \citep{paszke2019pytorch} and keep the default parameter initialization for the linear layers. 
We train both models by minimizing the cross-entropy loss function and use the Adam optimizer \citep{kingma2014adam} with the default hyperparameters (the learning rate is $0.001$). 
The Tetris MLGP learns to classify the eight shapes in the canonical orientation perfectly after 2000 epochs, whereas the Skeleton model trained for 10000 epochs achieves a test set accuracy of 92.9\%. 

For both, we then freeze the trained parameters and construct a steerable model. Note that we form steerable units only in the first layer and keep the output, \ie, classification, layer hypersphere neurons as they are. The steerability is not required for the subsequent layers as the output of the first layer becomes rotation-independent. The steerable units are formed from the corresponding frozen parameters as the (fixed) filter banks according to (\ref{eq:sphere_filter_bank}).

The only free parameters of this constructed steerable model are interpolation coefficients $v^k(\textbf{\textit{R}}) \in \mathbb{R}^4$ defined in (\ref{eq:vs_for_one_point}), where $k$ indexes the learned first layer parameters (spheres) in the ancestor MLGP model. 

Note that the interpolation coefficients (\ref{eq:vs_for_one_point}) are all parameterized by the input orientation $\textbf{\textit{R}}$. If the interpolation coefficients are computed correctly, the model output will be rotation-invariant.
However, in this paper, we focus on the steerability as such, and a practical way of determining $v^k(\textbf{\textit{R}})$ remains to be done in future work, including the experimental comparison with rotation-invariant classifiers from related work. Therefore, the following experiment serves as an empirical validation of the derived constraint rather than rotation-invariant predictions.
\vspace{-5pt}

\subsection{Known rotation experiment}
\label{known_r_experiment} 
Using the trained ancestor MLGP, we verify the correctness of \eqref{eq:gn_steerability_constraint} (and, therefore, \eqref{eq:our_steerability_constraint}). We first rotate the original data and then use this ground truth rotation to compute the interpolation coefficients of the constructed steerable model according to (\ref{eq:vs_for_one_point}). Our intuition is that if the steerability constraint (\ref{eq:gn_steerability_constraint}) is correct, then, given the transformed point set, the activations of the steerable units in the steerable model will be equal to the activations of the geometric neurons in the ancestor MLGP model fed with the point set in the canonical orientation. Hence, the classification accuracies of the ancestor and steerable models on the original and transformed datasets, respectively, should be equal.

We run this experiment 1000 times. Each time, we generate a random rotation and apply it to the original point set (in case of the 3D skeleton data, we use the \textit{test} split). We use this ground-truth rotation information to compute the interpolation coefficients for the steerable model, which we then evaluate on the transformed point set. 

To verify the stability of the steerable unit activations, we add uniform noise to the transformed points, $\textbf{\textit{n}} \sim U(-a, a)$, where the range of \textit{a} is motivated by the magnitude of the points in the datasets: for the Tetris data, the highest $a$ is chosen to be $0.5$, making it impossible for the noise to perturb a shape of one category into a shape of another class; for the skeleton data, the highest $a$ is set to $0.05$~m -- a reasonable amount of distortion for a human of average size (see, e.g., Figure~\ref{fig:skeleton_data}), which should be insufficient to completely change the representation of the skeleton action.  

For reference, we also present the accuracy of the ancestor model classifying the data in the canonical orientation. We summarize the results in Table~\ref{tab:known_r_experiment}. 
Additionally, by computing the L1 distance, we compare the hidden unit activations of the ancestor MLGP fed with the shapes in the canonical orientation (\textit{ground truth} activations) and those of the constructed steerable models fed with the rotated data (see Table~\ref{tab:known_r_experiment_l1}).

\section{Discussion and Conclusion}
\label{discussion}
Enabled by the complete understanding of the geometry of the spherical neurons, we show in Section~\ref{methodology} that we only need the spherical harmonics of degree up to $N=1$ to determine the effect of rotation on the activations in 3D. Using this result, we derive a novel 3D steerability constraintt~\eqref{eq:our_steerability_constraint} (and \eqref{eq:gn_steerability_constraint}), adding to the practical value of prior work \citep{banarer2003hypersphere, perwass2003spherical, melnyk2020embed}. 

The experiment conducted in Section~\ref{known_r_experiment} shows that the derived constraint is correct since the constructed steerable model produces equally (up to a numerical precision) accurate predictions for the rotated shapes as the ancestor for the shapes in the canonical orientation, provided that the interpolation coefficients are computed with the \textit{known} rotation.
From Table~\ref{tab:known_r_experiment}~and~~\ref{tab:known_r_experiment_l1}, we can see that the steerable model classification error and the L1 distance to the ground truth activations only moderately increase with the level of noise in the input data, which is a clear indication of the robustness of the classifier.

Noteworthy, invariance to point permutations in the input is not attained with the ancestor MLGP model that we used in the experiments. However, one can address the problem of permutation invariance quite straightforwardly: For example, one could follow the construction of, \eg, PointNet \citep{qi2017pointnet}, and replace the shared MLPs applied to each point in isolation with shared spherical neurons taking one point as input (\ie, hypersphere neurons), and then apply a global aggregation function to the extracted features. One would then apply the steerability constraint \eqref{eq:our_steerability_constraint} to the first layer learned parameters to create a steerable model.

In general, one can build an ancestor model containing spherical neurons only in the first layer, with subsequent layers of various flavors (see Figure~\ref{fig:model_overview}). 
As we mentioned in Section~\ref{steerable_model_construction}, steerability is not required for the subsequent layers as the output of the first layer ($\textbf{h}$ in Figure~\ref{fig:model_overview}) becomes rotation-independent, given that the interpolation coefficients are computed correctly.

The interpolation coefficients $v^k$ could be learned directly, \eg, by using a regression network. 
However, this would be a much more complex learning problem than determining $v^k$ \textit{indirectly} by regressing $\Rb$, since all $K$ interpolation vectors $v^k(\Rb)$ are by definition uniquely parameterized by $\Rb$. 

Rotation regression with NNs is per se an important and continuously studied problem that is often considered in the context of pose estimation \citep{zhou2019continuity, Chen_2022_CVPR}. 
Therefore, it is more efficient and intuitive to combine a rotation regressor with our steerable approach for creating a rotation-invariant classifier as compared to the approaches where the rotation information is naively encompassed in the learned parameters of a generic network by training on rotation-augmented data. 

Finally, the main theoretical results of this paper provide a rigorous and geometrically lucid mechanism for constructing steerable feature extractors, the practical utility of which can be fully unveiled in future work.

\section*{Acknowledgments}
	This work was supported by the Wallenberg AI, Autonomous Systems and Software Program (WASP), by the Swedish Research Council through a grant for the project Algebraically Constrained Convolutional Networks for Sparse Image Data (2018-04673), and the strategic research environment ELLIIT.

\bibliography{references}

\begin{thebibliography}{32}
\providecommand{\natexlab}[1]{#1}
\providecommand{\url}[1]{\texttt{#1}}
\expandafter\ifx\csname urlstyle\endcsname\relax
  \providecommand{\doi}[1]{doi: #1}\else
  \providecommand{\doi}{doi: \begingroup \urlstyle{rm}\Url}\fi

\bibitem[Anderson et~al.(2019)Anderson, Hy, and Kondor]{anderson2019cormorant}
Brandon Anderson, Truong~Son Hy, and Risi Kondor.
\newblock Cormorant: Covariant molecular neural networks.
\newblock In \emph{Advances in Neural Information Processing Systems}, pp.\
  14510--14519, 2019.

\bibitem[Banarer et~al.(2003{\natexlab{a}})Banarer, Perwass, and
  Sommer]{banarer2003design}
Vladimir Banarer, Christian Perwass, and Gerald Sommer.
\newblock Design of a multilayered feed-forward neural network using
  hypersphere neurons.
\newblock In \emph{International Conference on Computer Analysis of Images and
  Patterns}, pp.\  571--578. Springer, 2003{\natexlab{a}}.

\bibitem[Banarer et~al.(2003{\natexlab{b}})Banarer, Perwass, and
  Sommer]{banarer2003hypersphere}
Vladimir Banarer, Christian Perwass, and Gerald Sommer.
\newblock The hypersphere neuron.
\newblock In \emph{ESANN}, pp.\  469--474, 2003{\natexlab{b}}.

\bibitem[Chen et~al.(2022)Chen, Yin, Birdal, Chen, Guibas, and
  Wang]{Chen_2022_CVPR}
Jiayi Chen, Yingda Yin, Tolga Birdal, Baoquan Chen, Leonidas~J. Guibas, and
  He~Wang.
\newblock Projective manifold gradient layer for deep rotation regression.
\newblock In \emph{Proceedings of the IEEE/CVF Conference on Computer Vision
  and Pattern Recognition (CVPR)}, pp.\  6646--6655, June 2022.

\bibitem[Cohen \& Welling(2016)Cohen and Welling]{cohen2016steerable}
Taco~S Cohen and Max Welling.
\newblock {Steerable CNNs}.
\newblock \emph{arXiv preprint arXiv:1612.08498}, 2016.
\newblock In \textit{Proceedings of the International Conference on Learning
  Representations (ICLR)}, 2017.

\bibitem[Freeman et~al.(1991)Freeman, Adelson, et~al.]{freeman1991design}
William~T Freeman, Edward~H Adelson, et~al.
\newblock The design and use of steerable filters.
\newblock \emph{IEEE Transactions on Pattern analysis and machine
  intelligence}, 13\penalty0 (9):\penalty0 891--906, 1991.

\bibitem[Fuchs et~al.(2020)Fuchs, Worrall, Fischer, and Welling]{fuchs2020se3}
Fabian Fuchs, Daniel Worrall, Volker Fischer, and Max Welling.
\newblock Se(3)-transformers: 3d roto-translation equivariant attention
  networks.
\newblock In H.~Larochelle, M.~Ranzato, R.~Hadsell, M.~F. Balcan, and H.~Lin
  (eds.), \emph{Advances in Neural Information Processing Systems}, volume~33,
  pp.\  1970--1981. Curran Associates, Inc., 2020.

\bibitem[Granlund \& Knutsson(1995)Granlund and Knutsson]{diva2:302469}
Gösta Granlund and Hans Knutsson (eds.).
\newblock \emph{{Signal Processing for Computer Vision}}.
\newblock Kluwer, Dordrecht, 1995.

\bibitem[Jing et~al.(2020)Jing, Eismann, Suriana, Townshend, and
  Dror]{jing2020learning}
Bowen Jing, Stephan Eismann, Patricia Suriana, Raphael~JL Townshend, and Ron
  Dror.
\newblock Learning from protein structure with geometric vector perceptrons.
\newblock \emph{arXiv preprint arXiv:2009.01411}, 2020.
\newblock Published as a conference paper at \textit{ICLR} 2021.

\bibitem[Kingma \& Ba(2015)Kingma and Ba]{kingma2014adam}
Diederik~P Kingma and Jimmy Ba.
\newblock Adam: {A} method for stochastic optimization.
\newblock In \emph{Proceedings of the International Conference on Learning
  Representations (ICLR)}, 2015.

\bibitem[Knutsson et~al.(1992)Knutsson, Haglund, B{\aa}rman, and
  Granlund]{knutsson1992aframework}
Hans Knutsson, Leif Haglund, H{\aa}kan B{\aa}rman, and Gösta~H. Granlund.
\newblock {A framework for anisotropic adaptive filtering and analysis of image
  sequences and volumes}.
\newblock In \emph{[Proceedings] ICASSP-92: 1992 IEEE International Conference
  on Acoustics, Speech, and Signal Processing}, volume~3, pp.\  469--472 vol.3,
  1992.
\newblock \doi{10.1109/ICASSP.1992.226174}.

\bibitem[Kondor et~al.(2018)Kondor, Lin, and Trivedi]{NEURIPS2018_a3fc981a}
Risi Kondor, Zhen Lin, and Shubhendu Trivedi.
\newblock {Clebsch\textendash Gordan Nets: a Fully Fourier Space Spherical
  Convolutional Neural Network}.
\newblock In S.~Bengio, H.~Wallach, H.~Larochelle, K.~Grauman, N.~Cesa-Bianchi,
  and R.~Garnett (eds.), \emph{Advances in Neural Information Processing
  Systems}, volume~31. Curran Associates, Inc., 2018.

\bibitem[Li et~al.(2001{\natexlab{a}})Li, Hestenes, and
  Rockwood]{li2001generalized}
Hongbo Li, David Hestenes, and Alyn Rockwood.
\newblock Generalized homogeneous coordinates for computational geometry.
\newblock In \emph{Geometric Computing with {C}lifford Algebras}, pp.\  27--59.
  Springer, 2001{\natexlab{a}}.

\bibitem[Li et~al.(2001{\natexlab{b}})Li, Hestenes, and
  Rockwood]{li2001universal}
Hongbo Li, David Hestenes, and Alyn Rockwood.
\newblock A universal model for conformal geometries of {E}uclidean, spherical
  and double-hyperbolic spaces.
\newblock In \emph{Geometric computing with {C}lifford algebras}, pp.\
  77--104. Springer, 2001{\natexlab{b}}.

\bibitem[Marcos et~al.(2017)Marcos, Volpi, Komodakis, and
  Tuia]{Marcos_2017_ICCV}
Diego Marcos, Michele Volpi, Nikos Komodakis, and Devis Tuia.
\newblock {Rotation Equivariant Vector Field Networks}.
\newblock In \emph{{Proceedings of the IEEE International Conference on
  Computer Vision (ICCV)}}, Oct 2017.

\bibitem[Melnyk et~al.(2021)Melnyk, Felsberg, and Wadenbäck]{melnyk2020embed}
Pavlo Melnyk, Michael Felsberg, and Mårten Wadenbäck.
\newblock
  \href{https://openaccess.thecvf.com/content/ICCV2021/html/Melnyk_Embed_Me_if_You_Can_A_Geometric_Perceptron_ICCV_2021_paper.html}{Embed
  Me if You Can: A Geometric Perceptron}.
\newblock In \emph{Proceedings of the IEEE/CVF International Conference on
  Computer Vision (ICCV)}, pp.\  1276--1284, October 2021.

\bibitem[Paszke et~al.(2019)Paszke, Gross, Massa, Lerer, Bradbury, Chanan,
  Killeen, Lin, Gimelshein, Antiga, et~al.]{paszke2019pytorch}
Adam Paszke, Sam Gross, Francisco Massa, Adam Lerer, James Bradbury, Gregory
  Chanan, Trevor Killeen, Zeming Lin, Natalia Gimelshein, Luca Antiga, et~al.
\newblock Pytorch: An imperative style, high-performance deep learning library.
\newblock In \emph{Advances in Neural Information Processing Systems}, pp.\
  8024--8035, 2019.

\bibitem[Perona(1995)]{perona1995deformable}
Pietro Perona.
\newblock Deformable kernels for early vision.
\newblock \emph{IEEE Transactions on Pattern Analysis and Machine
  Intelligence}, 17\penalty0 (5):\penalty0 488--499, 1995.
\newblock \doi{10.1109/34.391394}.

\bibitem[Perwass et~al.(2003)Perwass, Banarer, and
  Sommer]{perwass2003spherical}
Christian Perwass, Vladimir Banarer, and Gerald Sommer.
\newblock Spherical decision surfaces using conformal modelling.
\newblock In \emph{Joint Pattern Recognition Symposium}, pp.\  9--16. Springer,
  2003.

\bibitem[Qi et~al.(2017)Qi, Su, Mo, and Guibas]{qi2017pointnet}
Charles~R Qi, Hao Su, Kaichun Mo, and Leonidas~J Guibas.
\newblock Pointnet: Deep learning on point sets for 3d classification and
  segmentation.
\newblock In \emph{Proceedings of the IEEE conference on computer vision and
  pattern recognition}, pp.\  652--660, 2017.

\bibitem[Reisert(2008)]{reisert2008group}
Marco Reisert.
\newblock \emph{Group integration techniques in pattern analysis - a kernel
  view}.
\newblock PhD thesis, University of Freiburg, Freiburg im Breisgau, Germany,
  2008.

\bibitem[Simoncelli \& Freeman(1995)Simoncelli and
  Freeman]{simoncelli1995pyramid}
Eero~P. Simoncelli and William~T. Freeman.
\newblock The steerable pyramid: a flexible architecture for multi-scale
  derivative computation.
\newblock In \emph{Proceedings., International Conference on Image Processing},
  volume~3, pp.\  444--447 vol.3, 1995.
\newblock \doi{10.1109/ICIP.1995.537667}.

\bibitem[Simoncelli et~al.(1992)Simoncelli, Freeman, Adelson, and
  Heeger]{simoncelli1992shiftable}
Eero~P. Simoncelli, William~T. Freeman, Edward~H. Adelson, and David~J. Heeger.
\newblock Shiftable multiscale transforms.
\newblock \emph{IEEE Transactions on Information Theory}, 38\penalty0
  (2):\penalty0 587--607, 1992.
\newblock \doi{10.1109/18.119725}.

\bibitem[Teo \& Hel-Or(1998)Teo and Hel-Or]{teo1998lie}
Patrick~C. Teo and Yacov Hel-Or.
\newblock Lie generators for computing steerable functions.
\newblock \emph{Pattern Recognition Letters}, 19\penalty0 (1):\penalty0 7--17,
  1998.
\newblock ISSN 0167-8655.
\newblock \doi{https://doi.org/10.1016/S0167-8655(97)00156-6}.

\bibitem[Thomas et~al.(2018)Thomas, Smidt, Kearnes, Yang, Li, Kohlhoff, and
  Riley]{thomas2018tensor}
Nathaniel Thomas, Tess Smidt, Steven Kearnes, Lusann Yang, Li~Li, Kai Kohlhoff,
  and Patrick Riley.
\newblock {Tensor field networks: Rotation-and translation-equivariant neural
  networks for 3D point clouds}.
\newblock \emph{arXiv preprint arXiv:1802.08219}, 2018.

\bibitem[{Van Gool} et~al.(1995){Van Gool}, Moons, Pauwels, and
  Oosterlinck]{VANGOOL1995259}
Luc {Van Gool}, Theo Moons, Eric Pauwels, and André Oosterlinck.
\newblock {Vision and Lie's approach to invariance}.
\newblock \emph{Image and Vision Computing}, 13\penalty0 (4):\penalty0
  259--277, 1995.
\newblock ISSN 0262-8856.
\newblock \doi{https://doi.org/10.1016/0262-8856(95)99715-D}.

\bibitem[Weiler \& Cesa(2019)Weiler and Cesa]{weiler2019general}
Maurice Weiler and Gabriele Cesa.
\newblock {General $E(2)$-Equivariant Steerable CNNs}.
\newblock \emph{Advances in Neural Information Processing Systems}, 32, 2019.

\bibitem[Weiler et~al.(2018{\natexlab{a}})Weiler, Geiger, Welling, Boomsma, and
  Cohen]{weiler20183d}
Maurice Weiler, Mario Geiger, Max Welling, Wouter Boomsma, and Taco~S Cohen.
\newblock {3D steerable {CNN}s: Learning rotationally equivariant features in
  volumetric data}.
\newblock In \emph{Advances in Neural Information Processing Systems}, pp.\
  10381--10392, 2018{\natexlab{a}}.

\bibitem[Weiler et~al.(2018{\natexlab{b}})Weiler, Hamprecht, and
  Storath]{weiler2018learning}
Maurice Weiler, Fred~A Hamprecht, and Martin Storath.
\newblock {Learning steerable filters for rotation equivariant CNNs}.
\newblock In \emph{Proceedings of the IEEE Conference on Computer Vision and
  Pattern Recognition}, pp.\  849--858, 2018{\natexlab{b}}.

\bibitem[Xia et~al.(2012)Xia, Chen, and Aggarwal]{xia2012view}
Lu~Xia, Chia-Chih Chen, and Jagdishkumar~K. Aggarwal.
\newblock View invariant human action recognition using histograms of 3d
  joints.
\newblock In \emph{2012 IEEE Computer Society Conference on Computer Vision and
  Pattern Recognition Workshops}, pp.\  20--27, 2012.
\newblock \doi{10.1109/CVPRW.2012.6239233}.

\bibitem[Zhao et~al.(2020)Zhao, Birdal, Lenssen, Menegatti, Guibas, and
  Tombari]{zhao2020quaternion}
Yongheng Zhao, Tolga Birdal, Jan~Eric Lenssen, Emanuele Menegatti, Leonidas
  Guibas, and Federico Tombari.
\newblock Quaternion equivariant capsule networks for 3d point clouds.
\newblock In \emph{European Conference on Computer Vision}, pp.\  1--19.
  Springer, 2020.

\bibitem[Zhou et~al.(2019)Zhou, Barnes, Lu, Yang, and Li]{zhou2019continuity}
Yi~Zhou, Connelly Barnes, Jingwan Lu, Jimei Yang, and Hao Li.
\newblock On the continuity of rotation representations in neural networks.
\newblock In \emph{Proceedings of the IEEE/CVF Conference on Computer Vision
  and Pattern Recognition}, pp.\  5745--5753, 2019.

\end{thebibliography}
\bibliographystyle{fullname_bibstyle}

\end{document}